\documentclass[letterpaper]{article} 
\usepackage{aaai24}  
\usepackage{times}  
\usepackage{helvet}  
\usepackage{courier}  
\usepackage[hyphens]{url}  
\usepackage{graphicx} 
\urlstyle{rm} 
\usepackage{natbib}  
\usepackage{caption} 
\frenchspacing  
\setlength{\pdfpagewidth}{8.5in} 
\setlength{\pdfpageheight}{11in} 
%
\usepackage{algorithm}
\usepackage{color}
\usepackage{amsmath}
\usepackage{amssymb}

\usepackage{multirow}
\usepackage{anyfontsize}
\usepackage{sidecap}
\usepackage{graphicx}
\usepackage{amsmath}
\usepackage{amssymb}
\usepackage{booktabs}
\usepackage{xcolor}
\usepackage{soul}
\usepackage{colortbl}
\usepackage{algpseudocode,algorithm,algorithmicx}

\usepackage{amsthm}
\usepackage{stmaryrd}
\usepackage{multirow}
\usepackage{array}
\usepackage{MnSymbol}

\newtheorem{rem}{Remark}

\newtheorem{ass}{Assumption}[]

\newtheorem{lem}{Lemma}[]
\newtheorem{prop}{Proposition}[]

%
\usepackage{newfloat}
\usepackage{listings}
\DeclareCaptionStyle{ruled}{labelfont=normalfont,labelsep=colon,strut=off} 
\lstset{%
	basicstyle={\footnotesize\ttfamily},
	numbers=left,numberstyle=\footnotesize,xleftmargin=2em,
	aboveskip=0pt,belowskip=0pt,%
	showstringspaces=false,tabsize=2,breaklines=true}
\floatstyle{ruled}
\newfloat{listing}{tb}{lst}{}
\floatname{listing}{Listing}
%
\pdfinfo{
/TemplateVersion (2024.1)
}

\setcounter{secnumdepth}{0} 

%


\title{
	Diagnosing and Rectifying Fake OOD Invariance: A Restructured \\Causal Approach
}
\author{
    Ziliang Chen\textsuperscript{\rm 1,}\textsuperscript{\rm 3},\ Yongsen Zheng\textsuperscript{\rm 2},\ Zhao-Rong Lai\textsuperscript{\rm 1},\ Quanlong Guan\textsuperscript{\rm 1}\thanks{indicate corresponding author},\ Liang Lin\textsuperscript{\rm 2}
}
\affiliations{
    \textsuperscript{\rm 1}Jinan University, \ 
     \textsuperscript{\rm 2}Sun Yat-sen University, \
      \textsuperscript{\rm 3}Pazhou Lab\\
      \tt\scriptsize c.ziliang@yahoo.com, \tt\scriptsize z.yongsensmile@gmail.com, \tt\scriptsize \{laizhr,Gql\}@jnu.edu.cn, \tt\scriptsize linliang@ieee.org


%
}

\usepackage{bibentry}

\begin{document}

\maketitle

\begin{abstract}
	
Invariant representation learning (IRL) encourages the prediction from invariant causal features to labels deconfounded \hspace{-0.1em}from the environments, advancing the technical roadmap of out-of-distribution (OOD) generalization. Despite spotlights around, recent theoretical result verified that some causal features recovered by IRLs merely pretend domain-invariantly in the training environments but fail in unseen domains. The \emph{fake invariance} severely endangers OOD generalization since the trustful objective can not be diagnosed and existing causal remedies are invalid to rectify. In this paper, we review a IRL family (InvRat) under the Partially and Fully Informative Invariant Feature Structural Causal Models (PIIF SCM /FIIF SCM) respectively, to certify their weaknesses in representing fake invariant features, then, unify their causal diagrams to propose ReStructured SCM (RS-SCM). RS-SCM can ideally rebuild the spurious and the fake invariant features simultaneously. \hspace{-0.1em}Given \hspace{-0.1em}this, \hspace{-0.1em}we further develop an approach based on conditional mutual information with respect to RS-SCM, then rigorously rectify the spurious and fake invariant effects. It can be easily implemented by a small feature selection subnet introduced in the IRL family, which is alternatively optimized to achieve our goal. Experiments verified the superiority of our approach to fight against the fake invariant issue across a variety of OOD generalization benchmarks.  

\end{abstract}

\section{Introduction}
A fundamental pressumption of machine learning widely believes that models are trained and tested with samples identically and independently (\emph{i.i.d.}) drawn from a distribution. Whereas in practice, the models are inevitably trained and deployed in ubiquitous scenarios so that they poorly perform than what was expected, due to the violation of the \emph{i.i.d.} condition inducing \emph{distributional shift} across various scenarios. The failure could be understood from the view of representation learning, where the \emph{i.i.d.} condition typically achieves the feature generalizing in a distribution, but unfortunately, at the sacrifice of generalization beyond this observed distribution. It is obviously impossible to obtain the domain universe by collecting data from all scenarios, so how to \emph{\textbf{learn reprsentation}} with limited observed domains for chasing the \emph{\textbf{invariant performance to unseen domains}}, have gradually become the promising trend known as invariant representation learning (IRL) for out-of-distribution (OOD) generalization \cite{shen2021towards,wang2022generalizing}.  

The emergence of IRL dates\hspace{-0.1em} back to approaches for domain adaptation \cite{ganin2016domain,zhao2019learning}
where data drawn from a test domain (so-called target domain) can be accessed to quantify the distributional shift, thus, invariant representation is spontaneously obtained while minimizing the domain shift. In the OOD generalization setup, only a few number of domains are available whereas the goal turns to learning the invariant representation to unseen domains. It becomes more challenging since minimizing the observed domain gaps does not imply the model generalization to unseen domains. The recent development of causal inference \cite{peters2016causal,mahajan2021domain} provided a set of innovative principles, \emph{i.e.}, Invariant Causal Prediction (ICP), of connecting the IRL and OOD generalization. 
Most IRL frameworks henceforth consider data as an endogenous vairable generated through a Structural Causal Model (SCM) 
\cite{pearl2010causal}, which could be partitioned into different \emph{environment factors} where each one corresponds to \hspace{-0.1em}a specific intervention action taken in the SCM. {In such regards}, IRL aims for the recovery of invariant features via the arbitrary environment interventions for diminishing spurious correlation with the label. Of particular prominent methods are Invariant Risk Minimization (IRM) \cite{arjovsky2019invariant}, Invariant Rationalization (InvRat) \cite{chang2020invariant,li2022invariant}, REx \cite{krueger2021out} and some other approaches in the similar spirit \cite{zhou2022sparse,ahuja2020invariant,li2022invariant}. Their objectives are optimized to prevent the classifier from overfitting to environment-specific properties (Figure.1.({\color{red}a})). 



\vspace{-5pt}\subsection{Fake OOD Invariant Effect}\vspace{-3pt}

Despite the potential
and popularity of IRL, plentiful follow-up studies unveiled IRLs' unreliability to learn invariant representation \cite{kamath2021does,nagarajan2020understanding,rosenfeld2020risks}, in which the most notorious problem is probably the \emph{\textbf{fake invariant effect}} 
. Particularly, given each environment factor to identify a specific spurious feature in a SCM, if the number of latent environment factors less than the capacity
of spurious features, latent spurious correlation would pretend as an invarant part of the algorithm-recovered features recovered by IRL (Figure.1.({\color{red}b})). The problem arouses from the existence of underlying shortcut $\Phi(\cdot)$ between invariant causal features $Z_{\rm c}$ and spurious features $Z_{\rm s}$. It receives the spurious variable to endow $[Z_{\rm c},\Phi(Z_{\rm s})]$ with the invariant property across training environments, where the classifier prefers $[Z_{\rm c},\Phi(Z_{\rm s})]$ rather than $Z_{\rm c}$ for IRL. While the OOD generalization easily fails since $Z_{\rm s}$ depends on environments that allows arbitrary change during testing.  

The fake invariance typically rises from the scarcity of environments that implicitly raises the “degree of freedom” of invariant representation. The uncontrolled “degree of freedom” are observed both in the linear and non-linear cases, where several recent efforts attempted to recover the true invariant features through the lens of causality. {However, existing paradigms fail to incorporate $\Phi(Z_{\rm s})$ as a part of SCM. It endangers OOD generalization since no knowledge of the data assumption on the underlying environments may cause a paradox for IRL \cite{ahuja2021invariance}.    
\begin{figure}[t]
	\center
	\includegraphics[width = 0.9\columnwidth]{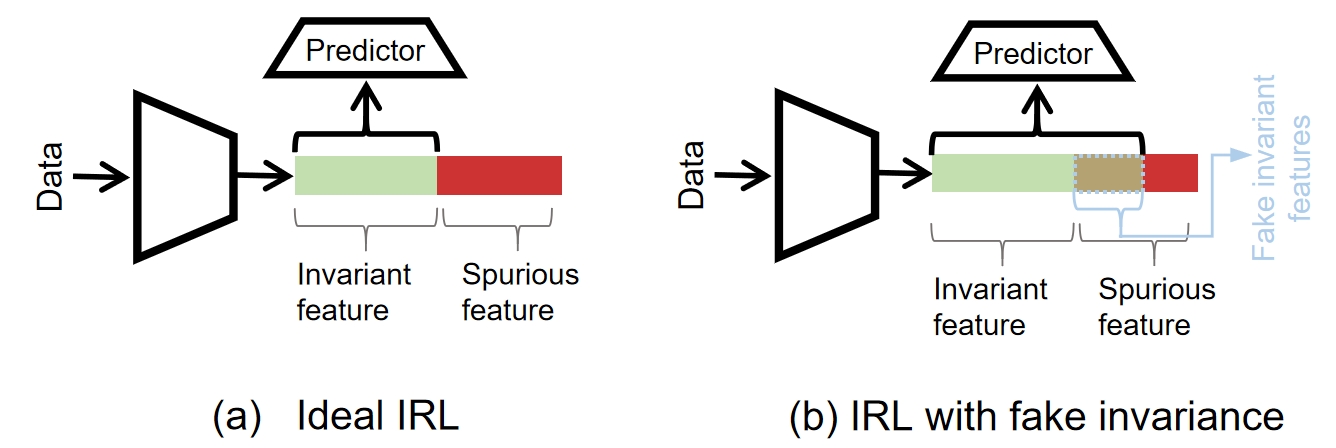}
	\vspace{-6pt}
	\caption{The comparison between ideal IRL and IRL contaminated by fake invariant features (the case in IRM).
	}
	\vspace{-6pt}
	\label{f1}
\end{figure}
\vspace{-5pt}\subsection{Contributions}\vspace{-3pt}

To solve the problem above, our work provides the first rigorous investigation of considering the fake variant shortcut $\Phi(Z_{\rm s})$ as a latent variable across diverse SCM data assumptions.    
Specifically, we firstly 
investigated two famous SCM data assumptions (Partially and Fully Informative Invariant Feature, PIIF SCM and FIIF SCM) commonly employed by existing IRL frameworks \cite{ahuja2021invariance}. \hspace{-0.1em}Under the \hspace{-0.1em}background \hspace{-0.1em}of a \hspace{-0.1em}IRL family derived from \cite{chang2020invariant,li2022invariant}, we certify PIIF SCMs impossibly to incorporate $\Phi(Z_{\rm s})$ whereas the paradigm of FIIF SCM surprisingly suits the information-theoretic properties behind $\Phi(Z_{\rm s})$. To obtain the best of both worlds, we propose a novel ReStructured SCM framework combing PIIF SCM and FIIF SCM to simultaneously rebuild and isolate the spurious and the fake invariant charateristics.    

Given this, we further proved why the IRL family only recovers the label-dependent spurious features but fails to mitigate the fake invariant features under the RS-SCM framework, and propose a conditional mutual information objective to rectify the negative invariant effect caused by $\Phi(Z_{\rm s})$. It can be easily implemented by a subnetwork to select invariant features then merge with the IRL family, which are alternatively trained to prevent invariant representation from the fake invariant effects. Diagnostic experiments and five large-scale real-world benchmarks validates our work.





\vspace{-2pt}\subsection{Related Work}\vspace{-2pt}


OOD generalization or domain generalization investigate \hspace{-0.1em}the principles to extend the empirical risk minimization (ERM) to suit the data beyond the training distributions \cite{wang2022generalizing,shen2021towards}. Before IRL becoming the trend, there have been three famous research lines. \emph{Data augmentation} increases the diversity of observed domains by taking complex operations to transform training data, \emph{i.e.}, randomization, mixup, altering location, texture and replicating the size of objects \cite{khirodkar2019domain,wang2020heterogeneous,yu2023distribution}, etc.
\emph{Meta-learning} optimizes a general domain-agnostic model, which turns into a domain-specific version with a few of the domain-specific samples for the test adaptation \cite{shu2021open,chen2023meta}. 
Ensemble approaches integrated submodels with regards to diverse training domains to generalize unseen distributions \cite{lee2022cross,chu2022dna}.    

Massive OOD generalization literatures are deeply related with IRL. The rationale behind aims to minimize
the upper bounds of the prediction errors in unseen distributions \hspace{-0.1em}which\hspace{-0.1em} used to rely upon the covariate
shift presumptions, yet \cite{chen2021domain,kuang2018stable} implausible when the spurious correlation occurs. Increasing attentions were repayed to causality to tackle the issue. Inspired by ICP \cite{peters2016causal}, a plenty of IRLs treat the predictions as invariant factors across different domains, which recovers the causation from feature to label regardless of environment interventions \cite{arjovsky2019invariant,ahuja2020invariant,chang2020invariant,krueger2021out,li2022invariant,jiang2022invariant}. Some causal learning achieve OOD generalization beyond ICP \cite{jalaldousttransportable,wang2022out,lv2022causality}.  


Recent critics are discussed to this roadmap due to the unsatisfied recovery of invariant features. IRMs were denouced since its feasible variant IRMv1 poorly adapts to deep models \cite{zhou2022sparse}
and lacks the robustness of environment diversity \cite{huh2022missing,lin2022zin}. \cite{nagarajan2020understanding} uncovered two failure modes caused by geometric and statistical skews in their nature. \cite{rosenfeld2020risks} rigorously exhibited the fake invariance in the linear setting and empirically reported the issue over a wide range of IRL methods. {Despite Invariant Information Bottleneck (IIB) \cite{li2022invariant} claiming their capability to solve this issue, our causal diagnosis inspired by \cite{ahuja2021invariance} verified that their solution powerless of the fake invariance.}        

Our work is closely related with Pearl's causality, the advanced knowledge \cite{pearl2010causal,pearl2009causal} before reading.

\begin{figure}[t]
	\center
	\includegraphics[width = 1\columnwidth]{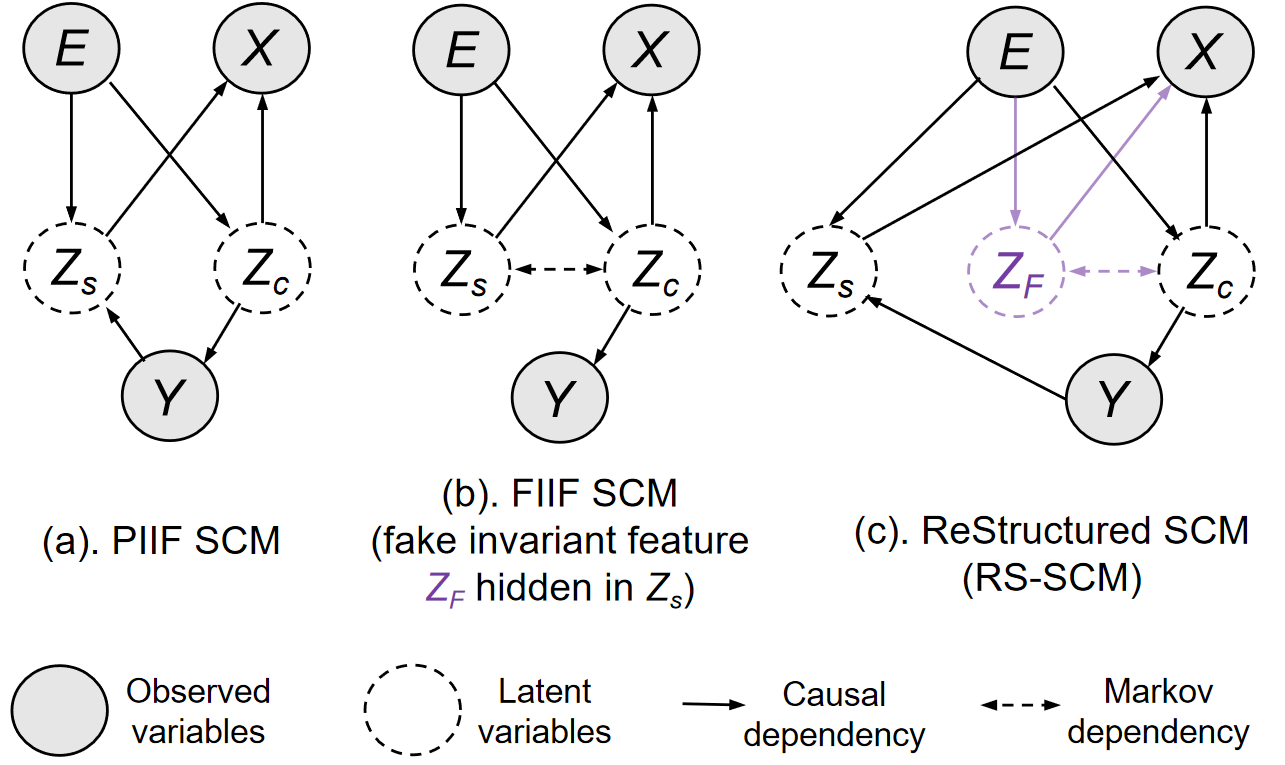}
	\vspace{-6pt}
	\caption{SCMs for InvRat and IIB in distribution shifts:({\color{red}a}). SCM with the PIIF condition ($Y$ $\nperp$ $E$,$Z_{s}$$|$$Z_{c}$); ({\color{red}b}). SCM with the FIIF condition ($Y$ $\perp$ $E$,$Z_{s}$$|$$Z_{c}$) where fake invariant features $Z_{F}$=$\Phi(Z_{s})$ hide; ({\color{red}c}).RS-SCM incorporates the FIIF and PIIF with the fake invariant variable $Z_{F}$.   
	}
	\vspace{-3pt}
	\label{scm}
\end{figure}
\vspace{-2pt}\section{Causal Diagnosis for Fake Invariance}
We first review IRL and its fake invariant issue, then propose a new SCM to reflect the fake invariance in causal diagram.
\vspace{-4pt}\subsection{Preliminary}\vspace{-0pt}
\textbf{OOD Generalization \& IRL setup.} Suppose we are given datasets $\mathcal{D}$$=$$\{\mathcal{D}_{e}\}$ for training, where each one refers to the training environment $e$ $\in$ $\mathcal{E}_{\rm tr}$ collected from the environment universe $\mathcal{E}$ ($\mathcal{E}_{\rm tr}$$\subset$$\mathcal{E}$). For a training set $\mathcal{D}_{e}=\{\boldsymbol{x}_{i}^{e},\boldsymbol{y}_{i}^{e}\}^{n_e}_{i=1}$, each sample with its label was \emph{i.i.d} drawn from the underlying joint distribution $\mathcal{P}_{e}$. The purpose of OOD generalization is to learn a model $f$ with $\mathcal{D}=\{\mathcal{D}_{e}\}_{e\in\mathcal{E}_{\rm tr}}$ for enabling the label prediction to the samples drawn from arbitrary environments in $\mathcal{E}$, thus, minimizing the population risk as follows:      
\begin{equation}\begin{small}
		\begin{aligned}
		\mathcal{R}_{\mathcal{E}}(f)=\max_{e\sim\mathcal{E}} [\mathcal{R}^{e}(f)]=\max_{e\sim\mathcal{E}} \big[\mathbb{E}_{\mathcal{P}_{e}(\boldsymbol{x},\boldsymbol{y})}  \big[\mathcal{L}\big(f(\boldsymbol{x}),\boldsymbol{y}\big)\big]\big]
		,\label{erm}
	\end{aligned}\end{small}
\end{equation}where $\mathcal{L}(\cdot,\cdot)$ denotes the loss function with regards to a task; $f(\boldsymbol{x})$$=$$\rho(h(\boldsymbol{x}))$ in which $h(\boldsymbol{x})$ denotes the feature extracted from $\boldsymbol{x}$ by encoder $h$ and $\rho$ receives $h(\boldsymbol{x})$ to predict the label.

Obviously the population risk in Eq.\ref{erm} is impossible to directly approximate since we have distributions of $\mathcal{E}_{\rm tr}$ instead of $\mathcal{E}$ during training. IRLs resort to training the model $f$ over $\mathcal{D}$ to find the representation with the domain-invariant properties. For instance, InvRat and its derivation IIB \cite{li2022invariant} achieve this goal by maximizing the mutual information (MI) in the usage of the invariance contraint:   
\begin{equation}\begin{small}
	\begin{aligned}
		\max_{\rho,h} \ I[Y;&h(X)] \ \ {\rm \textbf{s.t.}} \ \ Y \perp E \ | \ h(X),
	\end{aligned}\end{small}\label{InvRat}
\end{equation}where the capital letters denote variables and $\perp$ denotes their probabilistic independence. $I[Y;h(X)]$ denotes the MI between $Y$ (label variable) and $h(X)$ (feature variable) extracted from $X$ (data variable). The invariance contraint equals to minimze the Conditional MI (CMI) $I[Y;E|h(X)]$ between $Y$ and $E$ (environment variable) under the condition $h(X)$. They could be jointly formulated and optimized through their variational upper bounds \cite{alemi2016deep}. 

Existing work mostly investigates IRL from the IRM perspective, while our paper focuses on the IRL paradigms derived from InvRat or IIB. It helps to avoid many optimization issues that conventionally occur in IRM and its variants.

 
\textbf{Causal Rationales behind IRLs.} It is known that \textbf{\emph{OOD generalization is possible under practical causal assumptions}}, representing the data generation process with different types of distributional shifts with regards to different environment interventions \cite{ahuja2021invariance}. Resembling the similar principles, we provide a latent-variable SCM perspective to observe how InvRat / IIB generates $X$ from the latent variable $Z$. It is concatenated by the invariant features $Z_{c}$ and the variant counterpart $Z_{s}$ with respect to the environment drawn from $E$. In the regards of latent interaction between $Z_{c}$ and $Z_{s}$, we may further categorize the SCMs into the types \emph{Fully Informative Invariant Features} (FIIF) and \emph{Partially Informative Invariant Features} (PIIF), depending upon whether $Z_{c}$ is fully informed by $Y$, \emph{i.e.}, $Y \perp E,Z_{s} \ | \ Z_{c}$. Formal definitions are provided without additive noise for simplicity:  
\begin{ass}
	[PIIF Structural Causal Model (SCM)]
	\vspace{-3pt}\begin{displaymath}\begin{small}
		\begin{aligned}
			&Y&:=&f_{\rm inv}(Z_{c}), &Z_{c}&:=f_{\rm env}(E), \\&Z_{s}&:=&f_{\rm spu}(E,Y),
			&X&:=f_{\rm gen}(Z_{c},Z_{s});
		\end{aligned}\end{small}
	\end{displaymath}
\end{ass}\vspace{-6pt}
\begin{ass}
	[FIIF Structural Causal Model (SCM)]
	\begin{displaymath}\begin{small}
			\begin{aligned}
			&Y&:=&f_{\rm inv}(Z_{c}), &Z_{c}&:=f_{\rm env}(Z_{s},E), \\&Z_{s}&:=&f_{\rm spu}(Z_{c},E),
			&X&:=f_{\rm gen}(Z_{c},Z_{s}).
		\end{aligned}\end{small}
	\end{displaymath}
\end{ass}

Figure.\ref{scm}.\hspace{-0.02em}(a-b) \hspace{-0.02em}visualize the causal diagrams under the assumptions for InvRat and IIB. Despite representing different classes of distributional shifts, PIIF and FIIF SCMs simultaneously consider $Z_{s}$ as the spurious correlation of the causal routine from data $X$ to label $Y$, then, seeking to deconfound the spurious factors from $E$. Their common goal is to learn the feature encoder $h(\cdot)$ for recovering $Z_{c}$, then facilitate the invarint causal prediction $Z_{c}$$\overset{f_{\rm inv}}{\rightarrow}$$Y$. It is highlighted that the previous work discussed for InvRat /IIB were almost derived from the PIIF Assumption (Figure.\ref{scm}.({\color{red}a})). However, the FIIF Assumption (Figure.\ref{scm}.({\color{red}b})) was seldom investigated because spurious correlations mostly live in the situations where $Z_{s}$ is partially informed by $Y$ ($Y$ $\not\perp$ $E$,$Z_{s}$$|$$Z_{c}$), \emph{e.g.}, the spurious features refer to visual background information in an image. In contrast, our work prefers to the necessity of incorporating the FIIF SCM Assumption since 
	\emph{{the fake invariant effects are probably hidden in the FIIF-SCM Assumption}}.


\vspace{-2pt}\subsection{Fake Invariance from A Causal Lens}\vspace{-1pt}
To verfiy our claim, we need to overview the concept of fake invarian features. As demonstrated by \cite{li2022invariant}, they are some spurious features $Z_{s}$ that pretend to be the domain-invariant part of $Z$ by the shortcut $\Phi(\cdot)$ from two viewpoints 
\begin{itemize}
		\item $Y$$\perp$$E$$|$ $Z_{c}$,$\Phi(Z_{s})$ (\textbf{Fake invariance}): Combining $Z_{c}$ and $\Phi(Z_{s})$ can produce the lower empirical risk than the invariant risk, implying the domain-invariant property.
	\item $Y$$\not\perp$$E$$|$ $\Phi(Z_{s})$ (\textbf{Spuriousness}): For arbitrary $Z_{s}$,$Y$,$E$ under the SCM Assumptions aforementioned, we discover $Y$$\not\perp$$E$$|$ $Z_{s}$$\leftrightarrow$$I[Y;E|Z_{s}]$$>$$0$. Consider the shortcut $\Phi()$ effect on $I[Y;E|Z_{s}]$ reduced to 
	\begin{displaymath}\vspace{-4pt}\begin{small}
			\begin{aligned}
				&I[Y;E|\Phi(Z_{s})]\\=&I[Y;E]-\big(I[Y;\Phi(Z_{s})]- I[Y;\Phi(Z_{s})|E]\big)
				\\\geq&I[Y;E]-\big(I[Y;Z_{s}]- I[Y;Z_{s}|E]\big)
				\\=&I[Y;E|Z_{s}]>0,
			\end{aligned}
		\end{small}
	\end{displaymath}which leads to such property (The derivation details refer to our Appendix).
\end{itemize}\vspace{-3pt}Given these, we reconsider the environment interventions in the PIIF and FIIF SCM Assumptions, respectively, then discuss whether $\Phi(Z_{s})$ embedded in their frameworks. Notice that $\Phi(\cdot)$ is solely the pathway algorithmic-recovered from the feature encoder $h(\cdot)$. It does not refer to any dependency under SCM assumptions.

\textbf{PIIF SCM Fail to \hspace{-0.1em}Identify the \hspace{-0.1em}Fake \hspace{-0.1em}Invariance}.\hspace{-0.1em} In the PIIF SCM Assumption, \hspace{-0.1em}some evidences in Figure.\ref{scm}({\color{red}b}) can be readouted through the $d$-seperation rules \cite{pearl2010causal}: 
\begin{itemize}
	\item $Y$$\not\perp$ $E$ implies the label marginal alters according to the environment interventions (the \emph{non i.i.d.} property);
	\item $Y$$\perp$ $E$ $|$$Z_{c}$ demonstrates the independence between the label and the environment intervention provided with $Z_{c}$. So $h(\cdot)$ recovers the invariant features from $Z_{c}$ and may achieve OOD generalization regardless of $E$;
	\item $Y$$\not\perp$ $E$ $|$$Z_{s}$ and $Y$$\not\perp$ $E$$|$$Z_{c}$,$Z_{s}$ respectively demonstrate that (1). $Z_{s}$ indicates the non-invariant property that we mentioned; (2). If $h(\cdot)$ recovers $Z_{c}$, $Z_{s}$ simultaneously, the independence between $Z_{c}$ and $Y$ will not hold since $Z_{s}$ works for a collider between $E$ and $Y$.  
\end{itemize}
The observations explain a broad range of questions of IRLs except for the fake invariant phenomenon. Specifically, such issue is typically caused by the spurious features $Z_{s}$ via the shortcut $\Phi(Z_{s})$, whereas $\Phi(Z_{s})$ can not be reflected by $Z_{s}$ in the PIIF SCM since evidences $Y$$\not\perp$ $E$ $|$$Z_{s}$ and $Y$$\not\perp$ $E$$|$$Z_{c}$,$Z_{s}$ prevent the encoder $h(\cdot)$ from recovering features in $Z_{s}$. But what if $\Phi(Z_{s})$ indicates a part of $Z_{c}$? This conjecture is also impossible in the PIIF SCM setup due to the \emph{spuriousness} of $\Phi(Z_{s})$: $Y$$\not\perp$$E$$|$ $\Phi(Z_{s})$ obviously conflicted with the domain-invariant property required for the features in $Z_{c}$. 


\textbf{FIIF SCM Implies the Fake Invariance}. 
Provided with the failure witnessed in the PIIF SCM Assumption, we turn to the FIIF SCM Assumption and verify why the fake invariant features could be distinctly denoted as $Z_{s}$ in Figure.\ref{scm}({\color{red}b}). We analyze the causal independences behind the FIIF SCM Assumption by following the same routine of the PIIF SCM Assumption. Such data distribution satisfies
\begin{itemize}
	\item $Y$$\not\perp$ $E$ and $Y$$\perp$ $E$ $|$$Z_{c}$ both hold as the FIIF SCM does;
	\item $Y$$\not\perp$ $E$ $|$$Z_{s}$ indicates $Z_{s}$'s the spurious nature; 
	\item $Y$$\perp$ $E$ $|$$Z_{c}$,$Z_{s}$ demonstrates that combing $Z_{c}$ and $Z_{s}$ leads to the invariant representation, however, $Z_{s}$ should not be included since it implies spurisous correlations.
\end{itemize}Observe that the second property suggests $Z_{s}$ being domain-specific for the label prediction, whereas combining $Z_{c}$ and $Z_{s}$ \hspace{-0.1em}results in\hspace{-0.1em} the \hspace{-0.1em}domain\hspace{-0.1em}-\hspace{-0.1em}invariant property that intervents the causal prediction over $Z_{c}$,$Z_{s}$$\overset{f_{\rm inv}}{\rightarrow}$$Y$, which should have been $Z_{c}$$\overset{f_{\rm inv}}{\rightarrow}$$Y$ instead. In this case, the second and the third observations for $Z_{s}$ in the FIIF SCM setup do exactly refer to the \emph{fake invariance} and the \emph{spuriousness} behind $\Phi(Z_{s})$.

\textbf{ReStructued SCM}. Despite incorporating the fake invariance, $Z_{s}$ in the FIIF SCM Assumption contradicts the PIIF-SCM spurious correlation commonly found in practice. To obtain the best of both worlds, we restructure their SCMs and propose a new data generation regime that unify the PIIF and FIIF spurious correlations:   
\vspace{-2pt}\begin{ass}
	[ReStructured SCM (RS-SCM, Figure.\ref{scm}({\color{red}c}))]
	\begin{displaymath}\begin{small}\begin{aligned}
			Y:=&f_{\rm inv}(Z_{c}), &Z_{c}&:=f_{\rm env}(E, Z_{F}), \ \ Z_{s}:=f_{\rm spu}(E,Y),\\Z_{F}:=&f_{\rm fake}(E,Z_{c}),
			&X&:=f_{\rm gen}(Z_{c},Z_{s});
			\end{aligned}\end{small}
	\vspace{2pt}
	\end{displaymath}
\end{ass}\vspace{-1pt}The RS-SCM extends the previous PIIF SCM by branching the $Z_{s}$-based spurious features to embrace the $Z_{F}$ as our fake invariant features \emph{i.e.}, $Z_{F}$$=$$\Phi(Z_{s})$. The independencies between $Z_{F}$ and the other variables keep consistent with the spurious features $Z_{s}$ used in the FIIF SCM Assumption. Notably, the fake invariant effect only happens while the training environments overloaded with all spurious factors, therefore the causal subgraph with respect to $Z_{F}$ in the RS-SCM should be adaptively deactivated beyond this situation. The switchable SCM mechanism is inspired from the heterogeneous causal graph \cite{watson2023heterogeneous}
, where we highlight the switchable parts by purple in Figure.\ref{scm}({\color{red}c}).    
\begin{rem}
	The RS-SCM Assumption concurrently embeds spurious features and fake invariant features. 
\end{rem}
\vspace{-6pt}\section{Methodology}\vspace{-1pt}
In this section, we elaborate our methodology derived from the restructured causality. We first review the strategies in InvRat and IIB, then, showing how they fail to rectify $\Phi(Z_{s})$. Then we formulate our rectification objective to calibrate InvRat and IIB in an invariant learning manner. 
\vspace{-5pt}\subsection{InvRat Family Does Not Rectify $\Phi(Z_{s})$}\vspace{-3pt}
In the InvRat family, the vanilla InvRat obviously fails due to no effort paid to rectify $\Phi(Z_{s})$ by Eq.\ref{InvRat}. Its derivation IIB advocates the minmal information between $X$ and $h(X)$, \emph{i.e.}, $\min_{h} I[X;h(X)]$. The constraint pernalizes the capacity of invariant feature recovery in $h(X)$, then combined with the invariant constraint $\min_{h}I[Y;E|h(X)]$. It was deemed to remove $\Phi(Z_{s})$ hidden in the recovered feature $h(X)$, which is unreliable since their analysis is built upon the PIIP SCM Assumption where $\Phi(Z_{s})$ can not be reflected by their latent variables. But under our RS-SCM Assumption, whether the constraint $\min_{h} I[X;h(X)]$ enables the fake invariance elimination? Our theoretic result also denies such guess: 
\begin{prop}\label{prop1}
	In the RS-SCM Assumption, given invariant feature $Z_{c}$$\in$$\mathbb{R}^{n_c}$ and fake invariant features $\Phi(Z_{s})$$\in$$\mathbb{R}^{n_F}$ as a feature subset of fake invariant variable $Z_{F}$: $\Phi(Z_{s})$$\subset$$Z_{F}$, 
	we can find $Z$$\in$$\mathbb{R}^{n_c}$ with $Z$$\cap$$\Phi(Z_{s})$$\neq$$\emptyset$ that satisfies
	\begin{equation}\begin{aligned}
			\lambda I[Y;E|Z]+\beta I[X;Z]&\\ \leq \lambda I[Y;E|Z_{c}\cup\Phi(Z_{s})]+&\beta I[X;Z_{c}\cup\Phi(Z_{s})],\\ &\ \ \ \ \ \ \ \ \ \ \ \textbf{s.t.} \ \ \forall \lambda,\beta\in\mathbb{R^{+}}.
		\end{aligned}
	\end{equation}
\end{prop}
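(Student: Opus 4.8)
The plan is to exhibit one $n_c$-dimensional representation $Z$ that is genuinely contaminated ($Z\cap\Phi(Z_s)\neq\emptyset$) yet matches $Z_c\cup\Phi(Z_s)$ on the invariance term $I[Y;E|\cdot]$ while being no larger on the compression term $I[X;\cdot]$; the weighted inequality then holds uniformly in $\lambda,\beta$ --- indeed with both invariance terms equal to zero.

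Two facts do the work. First, in the RS-SCM $Y:=f_{\rm inv}(Z_c)$ is a deterministic function of $Z_c$, hence of any superset of its coordinates, so $I[Y;E|Z_c\cup\Phi(Z_s)]=0$; this is also precisely the \emph{fake-invariance} reading of the RS-SCM diagram ($Y\perp E\,|\,Z_c,\Phi(Z_s)$) recalled in the excerpt. Consequently the right-hand side of the claim equals $\beta\,I[X;Z_c\cup\Phi(Z_s)]$ for \emph{every} $\lambda,\beta$. Second, the switchable fake branch of RS-SCM --- $Z_F:=f_{\rm fake}(E,Z_c)$, $Z_c:=f_{\rm env}(E,Z_F)$, with $\Phi(Z_s)\subset Z_F$ and $n_F\ge 1$ --- builds in a genuine redundancy between the invariant block and the fake block. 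I will use it in the following form: there are an index $i\le n_c$ and a coordinate $\phi_j$ of $\Phi(Z_s)$ such that the set $Z_c^{(-i)}\cup\{\phi_j\}$ (where $Z_c^{(-i)}$ deletes the $i$-th coordinate of $Z_c$) still renders $Y\perp E$ --- either because $Z_c$ is reconstructible from it through the $f_{\rm env}/f_{\rm fake}$ relations, or, more cheaply, because $f_{\rm inv}$ need not consume the $i$-th coordinate of $Z_c$, so $Z_c^{(-i)}$ alone already determines $Y$ and adjoining $\phi_j$ cannot break that.

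Given this, take $Z:=Z_c^{(-i)}\cup\{\phi_j\}\in\mathbb{R}^{n_c}$. Then $\{\phi_j\}\subseteq Z\cap\Phi(Z_s)$, so $Z\cap\Phi(Z_s)\neq\emptyset$. By the second fact $Y$ is (a function determined by) $Z$, hence $I[Y;E|Z]=0=I[Y;E|Z_c\cup\Phi(Z_s)]$. For the compression term, $Z$ is a coordinate sub-selection of $Z_c\cup\Phi(Z_s)$, i.e. a deterministic function of it, so the chain rule $I[X;A\cup B]=I[X;A]+I[X;B|A]\ge I[X;A]$ (equivalently, the data-processing inequality) yields $I[X;Z]\le I[X;Z_c\cup\Phi(Z_s)]$. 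Hence, for all $\lambda,\beta\in\mathbb{R}^{+}$,
\[
\lambda\,I[Y;E|Z]+\beta\,I[X;Z]=\beta\,I[X;Z]\le\beta\,I[X;Z_c\cup\Phi(Z_s)]=\lambda\,I[Y;E|Z_c\cup\Phi(Z_s)]+\beta\,I[X;Z_c\cup\Phi(Z_s)],
\]
which is exactly the assertion.

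The main obstacle is the redundancy claim used to build $Z$: producing a coordinate $\phi_j$ of $\Phi(Z_s)$ that can legitimately replace a coordinate of $Z_c$ without breaking $Y\perp E|Z$. Everything else is routine chain-rule/DPI bookkeeping, and the monotonicity $I[X;Z]\le I[X;Z_c\cup\Phi(Z_s)]$ holds for \emph{any} sub-selection $Z$, so the entire force of the proof is concentrated there. I would carry it out by the "separation" route: check that $Z_c^{(-i)}\cup\{\phi_j\}$ still keeps $Y$ independent of $E$, which only needs that $Y=f_{\rm inv}(Z_c)$ does not exhaust all $n_c$ invariant coordinates --- precisely the "excess degrees of freedom" the paper blames for fake invariance --- rather than the harder route of writing an explicit inverse that reconstructs $Z_c$ from $Z_c^{(-i)}\cup\{\phi_j\}$ out of the RS-SCM functional equations.
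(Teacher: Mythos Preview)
Your approach is essentially the paper's: construct $Z\in\mathbb{R}^{n_c}$ as a coordinate sub-selection of $Z_c\cup\Phi(Z_s)$ containing features from both blocks, use the fake-invariance property to force $I[Y;E|Z]=I[Y;E|Z_c\cup\Phi(Z_s)]=0$, and bound $I[X;Z]\le I[X;Z_c\cup\Phi(Z_s)]$ by the chain rule/DPI. The paper writes $Z=[Z_1;Z_2]$ with $Z_1\subset Z_c$, $Z_2\subset\Phi(Z_s)$ and simply asserts $I[Y;E|Z]=0$ ``due to $Z$ composed by the invariant and fake invariant parts'' --- exactly the step you flag as the main obstacle --- so your sketch is, if anything, more explicit than the paper about where the argument actually rests.
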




\noindent The justification elaborates that for each invariant feature $Z_{c}$ recovered by $h()$, the FIIP SCM may search the feature $Z$ in the identical latent space of $Z_{c}$ to bound the invariant contraint of the IIB strategy, however, $Z$ satisfies $Z$$\cap$$\Phi(Z_{s})$$\neq$$\emptyset$ wherein the fake invariant features might be included by this reprsentation. IRLs conventionally optimize their models by way of non-convex variational bounds, thus  hardly to certify whether training $h(\cdot)$ may result in the recovery of $Z$ or $Z_c$. In terms of Proposition.\ref{prop1} and what we previously discussed, the conlusion is drawn to the InvRat family in RS-SCM:   
\vspace{-1pt}\begin{rem}\label{rem2}
	Under the RS-SCM Assumption, InvRat and IIB strategies distinguish the spurious features $Z_{s}$ whereas fail to elimiate the fake invariant features $Z_{F}=\Phi(Z_{s})$.
\end{rem}
\noindent Remark.\ref{rem2} illustrates the bright side of the InvRat familiy: the ability to debias $Z_{c}$,$Z_{F}$$\overset{f_{\rm inv}}{\rightarrow}$$Y$ from the spurious factors $Z_{s}$ by the invariant independence contraint $\big($\emph{i.e.}, $Y$$\perp$ $E$ $|$$h(X)$$\big)$. To achieve OOD generalization, we are required to prevent existing InvRat variants from the unexpected recovery of $Z_{F}$. 


\vspace{-3pt}\subsection{Rectification Approach by RS-SCM}\vspace{-0pt}
We move foward our discussion of how to wipe out $Z_{F}$ from $Z_{c}$$\cup$$Z_{F}$. Under the RS-SCM Assumption, we reconsider the Markov dependency across $Z_{c}$ and $Z_{F}$ then distinguish them according to their different behaviors for the label prediction conditioned with each other and $Z_{s}$. Specifically, when provided with $Z_{F}$ and $Z_{s}$, the $d$-seperation principle judges the causal prediction $Z_{c}$$\overset{f_{\rm inv}}{\rightarrow}$$Y$ with $Y$$\not\perp$$Z_{c}$$|$ $Z_{F}$,$Z_{s}$. It implies the causal path activated to maximize the CMI:
\begin{equation}\label{CMImax}
	\begin{small}\begin{aligned}
			\max_{Z_{c}} I[Y;Z_{c}|Z_{F},Z_{s}]=\max_{Z_{c}} I[Y;Z_{c}|h(X)/Z_{c}],
		\end{aligned}
	\end{small}
\end{equation}which is optimized for recovering $Z_{c}$ from invariant encoder $h(X)$ learned by InvRat or IIB. 

Similarly, we observe the causal dependency across $Y$ and $Z_{F}$, then analyze the label prediction $Z_{F}$$\overset{f_{\rm env}}{\rightarrow}$$Z_{c}$$\overset{f_{\rm inv}}{\rightarrow}$$Y$ given $Z_{s}$,$Z_{c}$ as the condition. It refers to $Y$$\perp$$Z_{F}$$|$ $Z_{c}$,$Z_{s}$ that equivalently minimizes the CMI constraint:
\begin{equation}\label{CMImin}
	\begin{small}\begin{aligned}
		\min_{Z_{F}} I[Y;Z_{F}|Z_{c},Z_{s}]=\min_{Z_{F}} I[Y;Z_{F}|h(X)/Z_{F}].		
		\end{aligned}
		\end{small}
\end{equation}The contraint above helps us to identify $Z_{F}$ from $h(X)$. 

Note that when $h(X)$ has been well trained by the InvRat or the IIB, their models encourage $h^\ast(X)$$=$$Z_{F}$$\cup$$Z_{c}$. The nice property unifies Eq.\ref{CMImax} and Eq.\ref{CMImin} into the same objective, \emph{i.e.},     
\begin{equation}\label{md}\begin{small}
	\begin{aligned}
		&\min_{Z_{c},Z_{F}} I[Y;Z_{F}|h^\ast(X)/Z_{F}]-\lambda I[Y;Z_{c}|h^\ast(X)/Z_{c}]\\
		=&\min_{Z_{c}} I[Y;h^\ast(X)/Z_{c}|Z_{c}]-\lambda I[Y;Z_{c}|h^\ast(X)/Z_{c}].
	\end{aligned}
\end{small}
\end{equation}where $\lambda$ indicates the trade-off co-efficient. Maximizing and minimizing CMI are intractable while thanks to the symmetry between \hspace{-0.1em}Eq.\ref{CMImax} \hspace{-0.1em}and \hspace{-0.1em}Eq.\ref{CMImin}, \hspace{-0.1em}the CMI decomposition holds as        
\begin{displaymath}\begin{small}
		\begin{aligned}
		 &I[Y;h^\ast(X)/Z_{c}|Z_{c}]=-H(Y|h^\ast(X))+H(Y|Z_{c});\\
		 &I[Y;Z_{c}|h^\ast(X)/Z_{c}]=-H(Y|h^\ast(X))+H(Y|h^\ast(X)/Z_{c}),
		\end{aligned}
	\end{small}\end{displaymath}so that we simplify Eq.\ref{md} for the $Z_{c}$ recovery from $h^\ast(X)$:
\begin{equation}\label{md2}\begin{small}
		\begin{aligned}
			&\min_{Z_{c}} \ I[Y;h^\ast(X)/Z_{c}|Z_{c}]-\lambda I[Y;Z_{c}|h^\ast(X)/Z_{c}]\\
		=&\min_{Z_{c}}	\ H(Y|Z_{c})-\lambda H(Y|h^\ast(X)/Z_{c}) + (\lambda-1)H(Y|h^\ast(X))
	\end{aligned}
	\end{small}
\end{equation}where $(\lambda-1)H(Y|h^\ast(X))$ is constant in the optimization. The objective implies that rectification only needs to select features from $h^\ast(X)$ to improve the causal invariant prediction $Z_{c}$$\overset{f_{\rm inv}}{\rightarrow}$$Y$ (the first term) and discourage the fake invariant prediction $Z_{F}$$\overset{f_{\rm env}}{\rightarrow}$$Z_{c}$$\overset{f_{\rm inv}}{\rightarrow}$$Y$(the second term). 
The joint CMI nature behind Eq.\ref{md2} holds the theoretical ganrantee as
\begin{prop}
	Suppose that $h^\ast(X)$$=$$Z_{F}$$\cup$$Z_{c}$ under the RS-SCM Assumption. If feature $Z$ recovered from $h^\ast(X)$ satisfies $I[Y;h^\ast(X)/Z|Z]$$=$$0$ and $I[Y;Z|h^\ast(X)/Z]$$>$$0$, it holds $Z=Z_{c}$ or $Z=Z_c\cup Z_F$. 
\end{prop}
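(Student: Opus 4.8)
The plan is to convert the two information-theoretic hypotheses into statements about \emph{functional determinism} of $Y$, and then read off the conclusion by inspecting the finitely many ways the latent blocks $Z_{c},Z_{F}$ can be combined inside $h^\ast(X)$. The structural inputs I will use are all immediate from the RS-SCM Assumption: since $Y:=f_{\rm inv}(Z_{c})$, the label is a deterministic function of $Z_{c}$, so $H(Y|Z_{c})=0$ and, because $h^\ast(X)=Z_{c}\cup Z_{F}$ contains $Z_{c}$ and conditioning on a superset cannot increase entropy, $H\big(Y|h^\ast(X)\big)=0$; moreover $Z_{F}:=f_{\rm fake}(E,Z_{c})$ has the same incoming/outgoing structure that $Z_{s}$ has in the FIIF-SCM, hence it keeps its \emph{spuriousness} $Y\not\perp E\,|\,Z_{F}$; and $Y$ is non-degenerate, $H(Y)>0$.

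First I would simplify the two constraints. Using the chain rule of conditional entropy together with $Z\cup\big(h^\ast(X)/Z\big)=h^\ast(X)$ and $H\big(Y|h^\ast(X)\big)=0$,
\begin{displaymath}\begin{aligned}
I\big[Y;h^\ast(X)/Z\,\big|\,Z\big]&=H(Y|Z)-H\big(Y|h^\ast(X)\big)=H(Y|Z),\\
I\big[Y;Z\,\big|\,h^\ast(X)/Z\big]&=H\big(Y|h^\ast(X)/Z\big)-H\big(Y|h^\ast(X)\big)=H\big(Y|h^\ast(X)/Z\big).
\end{aligned}\end{displaymath}
Thus $I[Y;h^\ast(X)/Z\,|\,Z]=0$ says precisely that the retained features $Z$ already determine $Y$ ($H(Y|Z)=0$), while $I[Y;Z\,|\,h^\ast(X)/Z]>0$ says precisely that the discarded features $h^\ast(X)/Z$ alone do \emph{not} determine $Y$ ($H(Y|h^\ast(X)/Z)>0$).

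Next I would enumerate. Reading $Z$ as a union of the latent blocks of $h^\ast(X)=Z_{c}\cup Z_{F}$, the candidates are $Z\in\{\emptyset,\ Z_{c},\ Z_{F},\ Z_{c}\cup Z_{F}\}$. The choices $Z=\emptyset$ and $Z=Z_{F}$ both fail the first constraint: $H(Y|\emptyset)=H(Y)>0$, and $H(Y|Z_{F})\ge I[Y;E|Z_{F}]>0$ (the inequality since $I[Y;E|Z_{F}]=H(Y|Z_{F})-H(Y|Z_{F},E)$ and entropies are non-negative, the strictness by the spuriousness of $Z_{F}$). For $Z=Z_{c}$ the first constraint holds because $H(Y|Z_{c})=0$, and the complement is $Z_{F}$, so the second holds because $H(Y|Z_{F})>0$. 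For $Z=Z_{c}\cup Z_{F}$ the first holds because $H\big(Y|h^\ast(X)\big)=0$, and the complement is $\emptyset$, so the second reads $H(Y)>0$, which is true. Hence only $Z=Z_{c}$ and $Z=Z_{c}\cup Z_{F}$ survive, which is the assertion.

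The hard part will be the single nontrivial structural fact used above, the strict spuriousness $I[Y;E|Z_{F}]>0$; I would obtain it the same way the paper derived $Y\not\perp E\,|\,Z_{s}$, by applying $d$-separation to Figure.\ref{scm}({\color{red}c}) (the collider $Z_{c}\to Z_{F}\leftarrow E$ leaves an unblocked $E$-to-$Y$ path once we condition on $Z_{F}$ but not on $Z_{c}$). I should also state explicitly that $Z$ is treated as a union of the blocks $Z_{c},Z_{F}$: if $Z$ were allowed to be an arbitrary sub-vector of $h^\ast(X)$, the argument above still forces $Z_{c}\subseteq Z\subseteq Z_{c}\cup Z_{F}$ but not the two extremes, and pinning $Z$ down further would require an extra minimality assumption ($f_{\rm inv}$ depends on every coordinate of $Z_{c}$, and no coordinate of $Z_{c}$ is recoverable from $Z_{F}$ alone) that the block-level reading renders unnecessary.
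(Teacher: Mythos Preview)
Your argument is correct and takes a genuinely different route from the paper. The paper keeps the CMI form throughout and does a subset-level case analysis: it writes $Z=Z_1\cup Z_2$ with $Z_1\subset Z_c$, $Z_2\subset Z_F$ and argues, case by case, that proper subsets $Z_1\subsetneqq Z_c$ or $Z_2\subsetneqq Z_F$ force one of the two CMIs to the wrong value by appealing to the fake-invariance and spuriousness properties of $\Phi(Z_s)$. You instead first collapse both hypotheses to conditional-entropy statements via $H(Y|h^\ast(X))=0$, so the problem becomes ``$Z$ determines $Y$ but $h^\ast(X)/Z$ does not,'' and then enumerate only the four block-level choices. What your approach buys is transparency: each case is one line, and the only nontrivial structural input (strict spuriousness $H(Y|Z_F)>0$) is isolated cleanly. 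What the paper's approach attempts to buy is scope: it aims at arbitrary sub-vectors rather than whole blocks. You correctly flag this gap and note that pinning a general sub-vector down to exactly $Z_c$ or $Z_c\cup Z_F$ would need extra minimality assumptions on $f_{\rm inv}$ and on the non-redundancy between $Z_c$ and $Z_F$; the paper's case analysis tacitly leans on such assumptions when it declares quantities like $I[Y;Z_1|(Z_c/Z_1)\cup Z_F]=0$ for $Z_1\subsetneqq Z_c$. For how the proposition is used downstream (the selector partitions $h^\ast(X)$ into two complementary blocks), your block-level reading is the relevant one.
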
The proposition implies the rectification may lead to the ideal $Z$$=$$Z_{c}$ or the trivial result that collapses into $Z_c\cup Z_F$. To prevent the trivial solution, we encourage the joint CMI objective optimized along with $\max_{Z}I(Z;h*(X)/Z)$, where $I(Z;h*(X)/Z)$$>$$0$ helps to get rid of the collapse (More refers to Appendix.A). 

\vspace{-3pt}\subsection{Interplay Invariant Learning}\vspace{-1pt}
Given the rectification approach by Eq.\ref{md2}, we propose a novel framework for OOD generalization.

\begin{figure}[t]
	\center
	\includegraphics[width = 0.9\columnwidth]{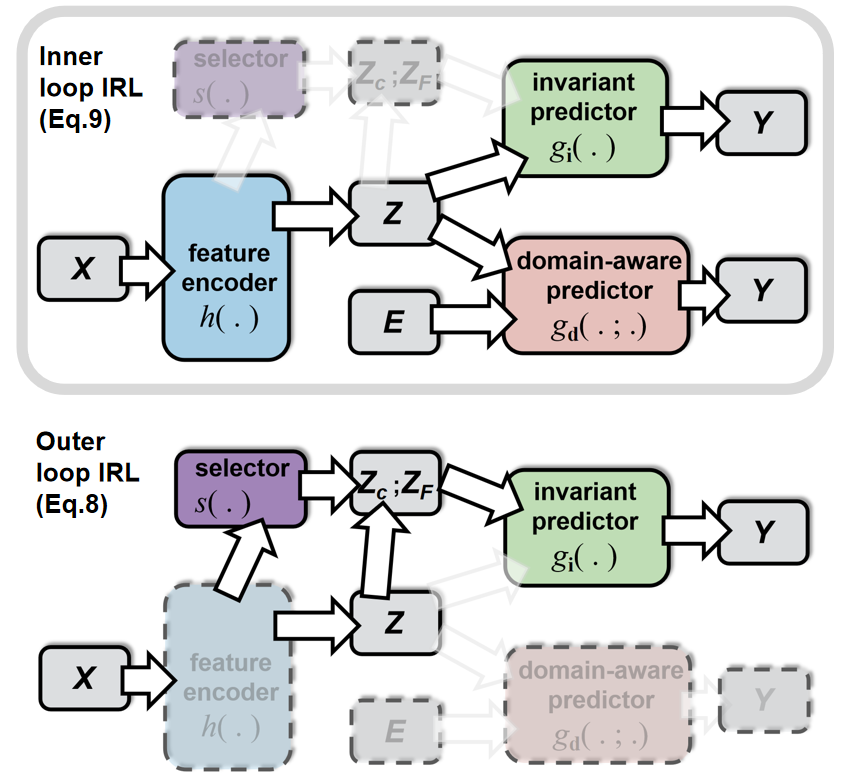}
	\vspace{-6pt}
	\caption{Our interplay invariant learning (IIL) framework. The transparency implies the network frozen or the variables not activated in this alternative phase (Best viewed in color).   
	}
	\vspace{-4pt}
	\label{framework}
\end{figure}
\vspace{-0pt}
\textbf{Neural Soft-Feature Selector.} Eq.\ref{md2} demands a small network that selects \hspace{-0.1em}features \hspace{-0.1em}from $h(X)$ \hspace{-0.1em}to recover\hspace{-0.1em} $Z_{c}$. \hspace{-0.1em}We take a simple two-layer architecture then adjust the scale complexity according to the tasks involved. The sub-network $s(\cdot)$ receives the latent layer's output from $h(X)$ to make the soft feature selection on $h(X)$. Specifically, $s(h(X))$ goes through a series of sigmoid activation functions to yield a vector with the same dimension of $h(X)$, where each positive output means that the feature is selected as $Z_{c}$. So $s(h(X))\odot$$h(X)$ corresponds to the soft-feature selection for $Z_{c}$ and $\big(\mathbf{1}-s(h(X))\big)\odot$$h(X)$ corresponds to the soft-feature selection for $Z_{F}$ ($\odot$ indicates the entry-wise product). The objective Eq.\ref{md2} turns into        
\begin{equation}\label{md3}\begin{small}
		\begin{aligned}
		\min_{s}	\ \mathbb{E}\big[\mathcal{L}\big(Y,s(h(X))&\odot h(X)\big)\big]\\-&\lambda \mathbb{E}\big[\mathcal{L}\big(Y,(\mathbf{1}-s(h(X)))\odot h(X)\big)\big] 
		\end{aligned}
	\end{small}
\end{equation}where we take the task-specific loss to approximate the conditional entropy,\emph{i.e.}, $\mathbb{E}[\mathcal{L}(Y,Z)]$$\rightarrow$$H(Y|Z)$.

\indent\textbf{Framework.} We show how to combine Eq.\ref{md3} with InvRat to learn invariant representation alternatively. Derived from the variational upper bounds of Eq.\ref{InvRat}, the InvRat family plays an adversarial game to jointly train three sub-networks, \emph{i.e.}, feature encoder $h(\cdot)$, invariant predictor $g_{\rm i}(\cdot)$, domain-aware predictor $g_{\rm d}(\cdot)$:    
\begin{equation}\label{invrat2}
	\begin{small}
		\begin{aligned}
			\min_{h,g_{\rm i}}\max_{g_{\rm d}} \ &\mathbb{E}\big[\mathcal{L}(Y; g_{\rm i}(h(X)))\big]\\&+\beta\big(\mathbb{E}\big[\mathcal{L}(Y; g_{\rm i}(h(X)))\big]-\mathbb{E}\big[\mathcal{L}(Y; g_{\rm d}(h(X)))\big]\big)
		\end{aligned}
	\end{small}
\end{equation}Given subnetworks pre-trained by Eq.\ref{invrat2}, our invariant learning framework alternatively performs to (1). train the feature selector $s(\cdot)$ with respect to Eq.\ref{md3} in the outer loop; (2). fine-tune the InvRat subnetworks by Eq.\ref{invrat2} in the inner loop. It is illustrated in Figure.\ref{framework}.

\vspace{-3pt}\section{Experiments}\vspace{-0pt}
In this section, we firstly conduct the diagnostic experiments on the benchmarks derived from recent studies \cite{arjovsky2019invariant,ahmed2020systematic} broadly applied in IRL for OOD generalization. It aims to validate whether our IIL can (1). rectify the fake invariant effect as demonstraed by our theoretical analysis; (2). remain the capability of InvRat and IIB to learn invariant representation. Afterwards, we evaluate our IIL framework in five competitive benchmarks for domain generalization in the wild, in order to verify IIL's feasibility in complex scenarios. Notice that InvRat is originally proposed for rationalization but IIB exactly share the most of its optimization pipelines beyond the MI constraint $I[X;h(X)]$. In this regards, our experiments consider InvRat as the IIB without this regularization.   

\textbf{Benchmarks.} The diagnostic study provides the forensic of IRL baselines under the RS-SCM Assumption. It requires the datasets generated by the same causal mechanism, however, existing diagnostic benchmarks are generated by either FIIF or PIIF SCM, hardly fullfiling our demand \cite{arjovsky2019invariant,ahmed2020systematic}. We observe that RS-SCM consists of FIIF and PIIF SCM Assumptions so that combine their generation recipes to build our diagnostic benchmark to evaluate the invariant learning quality. 
\begin{table}[ht]
	\centering
	\setlength{\tabcolsep}{2.5pt}
	\vspace{-6pt}{\fontsize{9}{15}\selectfont
		\begin{tabular}{c c c c c c  c c c c c  }
			\toprule[1.1pt]
			Dataset &$Z_{c}$ &$Z_{s}$ & $Z_{F}$ &Training/Test Samples \cr
			\midrule[1pt]
			CS-MNIST-CIFAR&Digit&CIFAR&Color&\begin{minipage}[b]{0.3\columnwidth}
				\centering
				\raisebox{-.45\height}{\includegraphics[width=0.5\linewidth]{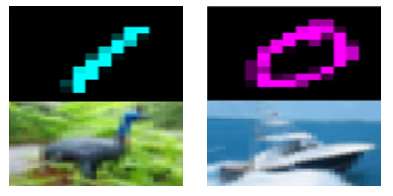}}
			\end{minipage}\cr\hline
			CS-MNIST-COCO&Digit&Object&Color&\begin{minipage}[b]{0.3\columnwidth}
				\centering
				\raisebox{-.45\height}{\includegraphics[width=0.5\linewidth]{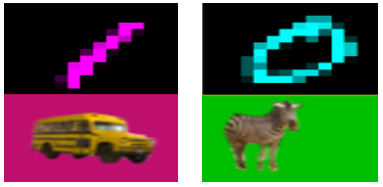}}
			\end{minipage}\cr
			\bottomrule[1pt]
	\end{tabular}}\vspace{-5pt}\caption{The summary of our diagnostic benchmarks.}\label{t1}\vspace{-12pt}
\end{table} 

Specifically, we consider the ten-class digit classification mission derived from the CS-MNIST (FIIF) benchmark in \cite{ahuja2021invariance}. It consists of two environments for training with 20,000 samples each and one environment for evaluation with the same number of data. In this setup, $Z_c$ would refer to the shape of digit and ten digit classes would be associated with ten colors respectively, with an environment-specific probability $p_{e}$. The color indicates the fake invariant variable $Z_{F}$ and the association indicates the Markov dependency between $Z_c$ and $Z_{F}$, and $p_{e}$ implies the environment dependences from $E$ to latent variables $Z_F$ and $Z_c$: $p_{e}$ indicates their association activated otherwise the digit would randomly associated with the ten colors.          
 
The generated digits can not represent the RS-SCM since the spurious factor $Z_s$ has not been included. In this case, we resemble the composition rule in CIFAR-MNIST \cite{zhou2022sparse} whereas we classify MNIST instead of CIFAR. So given each colored digit generated by the previous strategy, we combine it with a CIFAR image drawn from the generative process following the PIIF SCM Assumption \cite{arjovsky2019invariant}. Specifically, given a digit generated by the previous process, we take a random flip with 25\% chance to randomly change its label; then we associate this digit-class label with a CIFAR class with environment-dependent probability $\hat{p}_{e}$. So we have the CS-MNIST-CIFAR to represent the RS-SCM Assumption where the spurious vairable $Z_{s}$ is indicated by the CIFAR classes. We replay this process with Color-COCO then get the second RS-SCM benchmark CS-MNIST-COCO (see Table.1). $p_{e}=1,0.9$ and $\hat{p}_{e}=1,0.9$ are set up for two training environments, respectively. 

Beyond the diagnostic datasets, we also conduct the experiments on VLCS, PACS, Office-HOME, Terra-Incognita and DomainNet, which refer to DomainBed \cite{gulrajani2020search} for real-world OOD generalization.          
 
\textbf{Experimental setup.} In terms of the feature encoder, invariant predictor and domain-aware predictor, we employs the architecutres applied in \cite{li2022invariant}. We take a simple two-layer network for CS-MNIST and a transformer-like subnetwork for DomainBed as our neural feature selectors.  

\vspace{-4pt}\subsection{Diagnostic OOD Generalzation}\vspace{-2pt}
\begin{table}[h]
	\centering
	\setlength{\tabcolsep}{2.5pt}
	\vspace{-5pt}{\fontsize{8.5}{11.5}\selectfont
		\begin{tabular}{c c c c c c  c c ccc  }
			\toprule[1.5pt]
			\multicolumn{1}{c}{Methods}&\multicolumn{2}{c}{{ID Acc ($\uparrow$)}}&\multicolumn{4}{c}{OOD Generalization Acc ($\uparrow$)}\cr\cline{2-2}\cline{4-6}
			&No shift&& $Z/Z_s$&$Z/Z_F$&$Z/(Z_s,Z_F)$ \cr
			ERM &93.22&&11.87&13.76&10.13 \\
			IRM &94.49&&59.58&53.43&50.06 \\
			IRM+IB &96.14 &&66.98&70.42&59.71 \\\hline
			InRav &89.25  &&63.39 &{65.75} &60.89   \\
			IIB &\textbf{96.76} &&\textbf{70.98}  &{69.42} &66.23\\ 
			InvRav(+ours) &91.72 $\uparrow$ && 64.47$\uparrow$ &{69.23} $\uparrow$ &64.40 $\uparrow$\\
			IIB(+ours) &95.17$\downarrow$ &&70.19$\downarrow$ &\textbf{70.54} $\uparrow$ &\textbf{68.37} $\uparrow$\\
			\bottomrule[1.5pt]
	\end{tabular}}\vspace{5pt}\caption{ID / OOD generalization accuracies on CS-MNIST-CIFAR. $Z/Z_s$, $Z/Z_c$, and $Z/(Z_s,Z_c)$ indicate different distributional shifts between training and test (without spurious factor, without fake invariant, without the both).}\label{t2}\vspace{-6pt}
\end{table}
\begin{table}[h]
	\centering
	\setlength{\tabcolsep}{2.5pt}
	\vspace{-5pt}{\fontsize{8.5}{11.5}\selectfont
		\begin{tabular}{c c c c c c  c c ccc  }
			\toprule[1.5pt]
			\multicolumn{1}{c}{Methods}&\multicolumn{2}{c}{{ID Acc ($\uparrow$)}}&\multicolumn{4}{c}{OOD Generalization Acc ($\uparrow$)}\cr\cline{2-2}\cline{4-6}
			&No shift&& $Z/Z_s$&$Z/Z_F$&$Z/(Z_s,Z_F)$ \cr
			ERM&92.63&&10.24&11.47&9.67 \\
			IRM &94.49&  &49.67&54.26 &47.19 \\
			IRM+IB &\textbf{96.14}&  &56.91 &63.92  &{55.27} \\\hline
			InRav &89.25  &&53.07 &{61.75} &51.33   \\
			IIB &92.44 &&61.14  &{66.38} &57.62\\ 
			InvRav(+ours) &91.72 $\uparrow$ &&58.86 $\uparrow$ &{66.42} $\uparrow$ &{56.16} $\uparrow$\\
			IIB(+ours) &93.17 $\uparrow$ &&\textbf{63.72} $\uparrow$ &\textbf{69.59} $\uparrow$ &\textbf{62.35} $\uparrow$\\
			\bottomrule[1.5pt]
	\end{tabular}}\vspace{-5pt}\caption{ID / OOD generalization accuracies on CS-MNIST-COCO. $Z/Z_s$, $Z/Z_c$, and $Z/(Z_s,Z_c)$ indicate different distributional shifts between training and test (without spurious factor, without fake invariant, without the both).}\label{t3}\vspace{-8pt}
\end{table} 
\noindent\textbf{Baselines.} Beyond InvRat and IIB, we take IRM \cite{arjovsky2019invariant} and IRM+IB \cite{ahuja2021invariance} implemented by IRMv1 variants as our IRL baselines. We also employed ERM as the borderline to judge the IRL performance.  

\begin{table}[t]
	\centering
	\setlength{\tabcolsep}{2.5pt}
	{\fontsize{8.5}{12}\selectfont
		\begin{tabular}{c c c c c c  c c ccc  }
			\toprule[1.5pt]
			&\rotatebox{45}{VLCS}& \rotatebox{45}{PACS}&\rotatebox{45}{Office-H}&\rotatebox{45}{Incognita}&\rotatebox{45}{DomainN}&\rotatebox{45}{Average} \cr
			ERM &77.2&83.0&65.7&41.4&40.6&61.6 \\
			IRM &\textbf{78.5}  &83.5&64.3 &\textbf{47.6}&33.9&61.6 \\
			VREx &78.3 &84.9  &66.4 &46.4 &33.6 &61.9\\
			CausalIRL &77.6 &  84.0 &65.7 &46.3 &40.3 &62.8
			\\\hline
			InRav   &77.3&83.5 &66.2 &44.6&35.1&  61.3 \\
			IIB &77.2 &83.9  &68.6 &45.8&41.5&63.4\\ 
			InvRav(+ours) &77.3&84.6 &66.3 &46.1&40.1&62.9  \\
			IIB(+ours) &{77.6}&\textbf{85.8} &\textbf{68.8} &\textbf{47.6}&\textbf{42.5}&\textbf{64.4}  \\
			\bottomrule[1.5pt]
	\end{tabular}}\vspace{-5pt}\caption{OOD generalization accuracy on DomainBed.}\label{t4}\vspace{-6pt}
\end{table} 
\noindent\textbf{Results.} 20\% training data are split into the validation set for CS-MNIST-CIFAR and CS-MNIST-COCO, where all baselines are evaluated to produce their in-distribution (ID) performances. Our evaluation is interested in OOD generalization across diverse distributional shifts: (1).$Z/Z_s$ indicates the test environment with the spurious covariate shift (each test digit was randomly matched with an image drawn from CIFAR or ColorCOCO regardless of the image label); $Z/Z_F$ indicates the test environment with the fake invariant distributional shift (each test digit was randomly matched with a color); $Z/(Z_s,Z_F)$ denotes the test environment containing the spurious and fake invariant distribution shifts concurrently (a test digit takes the both actions simultaneously). 
The accuracies across all baselines are evidenced in Table.\ref{t2} (CS-MNIST-CIFAR) and Table.\ref{t3} (CS-MNIST-COCO).  

Our diagnostic experiment was conducted to address two major concerns to our approach. 1.(\emph{identification concern}): if the fake invariance happens (\emph{i.e.}, $Z_F$ has been activated in the RS-SCM), why IRL needs to identify the fake invariant variable $Z_z$? 2.(\emph{rectification concern}): whether our approach rectify the negative effect caused by the fake invariant features $Z_z$ instead of other spurious covariates? 

In view of the diagnosis concern, we investigate the comparison among diverse testbed scenarios. We first note that the ERM almost perform to approximate the random guess in arbitrary OOD situations, implying our diagnostic benchmark with diverse and significant distributional shifts. Beyond this, $Z/Z_s$ arouses more severe accuracy drop compared with $Z/Z_F$. It makes sense since the introduced image contains spurious covariates with the higher dimensionality than the colorized pixels. But even so, the accuracies of $Z/(Z_s,Z_F)$ in the majority of baselines almost underperform their $Z/Z_s$ counterparts. It verified that the covariate shift caused by the fake invariant variable $Z_{F}$ could not be conveniently eliminated by addressing the shift caused by $Z_s$. It justifies the superiority of our approach, which identifies the covariate shift caused by $Z_F$ to prevent the invariant prediction from the biased representaion $(Z_s,Z_F)$.    

Given this, we compare our approach with other baselines particularly, InvRat and IIB, to verify its rectification ability to the fake invariant factors $Z_F$. In Table.\ref{t2}, the performances of InvRat and IIB boosted by our IIL are inconspicuous in the ID scenarios and OOD scenario $Z/Z_s$ (+2.48 in ID and +1.09 in $Z/Z_s$ for InvRat, respectively; and receives negative transfer in ID and $Z/Z_s$ for IIB), yet its accuracy boost significantly when comes to $Z/Z_F$ and $Z/(Z_s,Z_F)$ scenarios \emph{e.g.}, +3.48 in $Z/Z_F$ and +3.51 in $Z/(Z_s,Z_F)$ for InvRat. In Table.\ref{t3}, the performance boost has been observed more sigficantly. The ablation evidences demonstrate that our rectification approach mainly works for eliminating the negative covariate shift caused by $Z_F$ while thanks to the interplay learning manner, the overall performance get benefited.   

\vspace{-2pt}\subsection{Real-world OOD Generalization}\vspace{-2pt}

\noindent\textbf{Baselines.} We follow the evaluation setup and testify all IRL baselines including ERM, IRM, VREx \cite{krueger2021out}, and the recent approach CausalIRL \cite{chevalley2022invariant}, which all belong to competitive IRL baselines. We also evaluated other 15 baselines apart from IRL approaches in Appendix.C. We employ the model selection strategy by leave-one-domain-out cross validation.  

\noindent\textbf{Results.} In Table.\ref{t4}, we evaluate our approach by combining it with InvRat and IIB. In general, our approach improved InvRav by +1.6\% and IIB by +1.0\%. In terms of our theoretical finding and evidences shown in the diagnostic evaluation, we figure the improvement probably due to the fake variant factors eliminated by our approach. It verified that our approach is compatible with the information bottleneck regularization. Whereas we also observe that the increase in VLCS is very limited (+0.0 for InvRat and +0.4 for IIB in VLCS). It is probably due to VLCS composed by 5 classes across 4 datasets, thus, insufficient to capture the domain-specific complexity. Beyond this, our approach significantly outperform the other baselines.

\vspace{-2pt}\subsection{Ablation}\vspace{-2pt}

\begin{figure}[th]
	\center
	\includegraphics[width = 0.95\columnwidth]{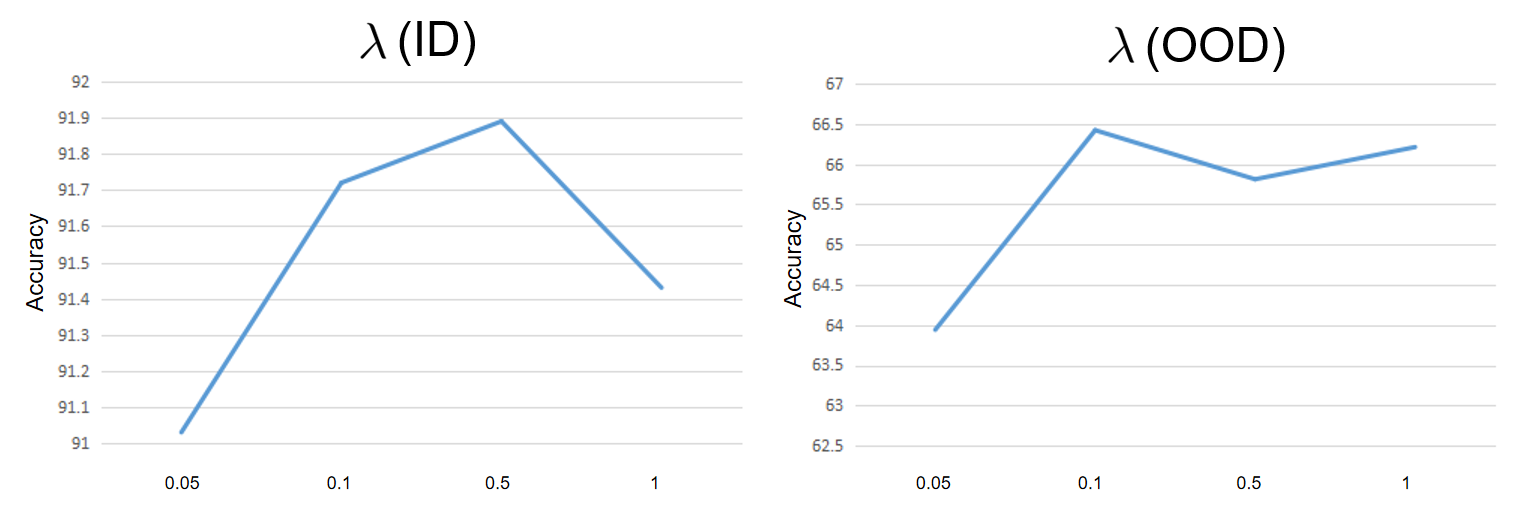}
	\vspace{-6pt}
	\caption{The ablation of $\lambda$ on CS-MNIST-COCO based on InvRat (+ours). The OOD generalization accuracy is based on $Z/Z_F$ setup.
	}
	\vspace{-6pt}
	\label{f1}
\end{figure}
\subsubsection{Ablation of $\lambda$.}
We ablate $\lambda$ in Figure.\ref{f1}(SM). The performances for in-distribution generalization is more robust than OOD generalization as varying $\lambda$. 

\subsubsection{Ablation of neural feature selector $s(\cdot)$.}
We provide the ablation about the role of neural feature selector in the performance boost. We train the IIL model as usual then test the model w/wo neural feature selector, respectively; and we also compare their results with their baseline InvRat. As observed in Table.\ref{t1}, the neural feature selector improve the performance literally based on $h(X)$. We also find that even if the feature selector is deactivated, $h(X)$ trained by IIL remains superior than the original InvRat. 
\begin{table}[h]
	\centering
	\setlength{\tabcolsep}{2.5pt}
	\vspace{-2pt}{\fontsize{9}{15}\selectfont
		\begin{tabular}{c c c c c c  c c c c c  }
			\toprule[1.5pt]
			$s(\cdot)$ &w &wo &InvRat \cr\hline
			Accuracy (ID) &91.72& 90.34 &89.25\cr
			Accuracy (OOD, $Z/Z_F$) &66.42&62.46 &61.75\cr
			\bottomrule[1pt]
	\end{tabular}}\vspace{-2pt}\caption{The ablation of the neural feature selector on CS-MNIST-COCO based on InvRat (+ours).}\label{t1}\vspace{-2pt}
\end{table} 

\subsubsection{Ablation of $I(Z_c;h(X)/Z_c)$.}
We finally provide the ablation of $I(Z_c;h(X)/Z_c)$ executed by MINE technique. As viewed in Table.\ref{t2}, we find that the performance of IIL drop significantly without the MI maximization contraint. It is probably due to the trivial outcome generated by the joint CMI objective. It emphasizes the importance of preventing the trivial solution.   
\begin{table}[h]
	\centering
	\setlength{\tabcolsep}{2.5pt}
	\vspace{-2pt}{\fontsize{9}{15}\selectfont
		\begin{tabular}{c c c c c c  c c c c c  }
			\toprule[1.5pt]
			Methods &InvRat (ID) &IIB (ID) &InvRat (OOD) &IIB (OOD) \cr\hline
			wo &90.15&91.74 &64.26&67.89\cr
			w  &91.72&93.17&66.42&69.59\cr
			\bottomrule[1pt]
	\end{tabular}}\vspace{-2pt}\caption{The ablation of $\lambda$ on CS-MNIST-COCO. The OOD generalization accuracy is based on $Z/Z_F$ setup.}\label{t2}\vspace{-6pt}
\end{table} 

\vspace{-2pt}\section{Conclusion}\vspace{-0pt}
This paper attempts to resolve the fake invariance problem for IRL, which undermines the OOD generalization performance. We proposes a novel structural causal model, ReStructured SCM (RS-SCM) to reconstruct both spurious and fake invariant features from the data. Our RS-SCM inspires a neural feature selection approach based on conditional mutual information to eliminate the spurious and fake invariant effects. Experiments demonstrates the effectiveness of our method on various OOD generalization benchmarks.

\section{Appendix.A}
\subsection{The Derivation of ``Spuriousness''}
Here we demonstrate the derivation to induce the spuriousness property of invariant features. The equality in the second line is simply derived from MI's chain rule and its rearrangement $I[X;Y|Z]$$=$$I[X;Y,Z]$-$I[X;Z]$, \emph{i.e.},
\begin{small}\vspace{-4pt}
	\begin{displaymath}\begin{aligned}
			I[Y;E|\Phi(Z_{s})]&=I[Y;E,\Phi(Z_{s})]-I[Y;\Phi(Z_{s})]
			\\&=I[Y;E]+ I[Y;\Phi(Z_{s})|E]-I[Y;\Phi(Z_{s})].
			\vspace{-4pt}		\end{aligned}
	\end{displaymath}
	\vspace{-0pt}\end{small}As for the inequality
\begin{displaymath}
	\begin{aligned}
		I[Y;\Phi(Z_{s})]- I[Y;\Phi(Z_{s})|E]
		\geq I[Y;Z_{s}]- I[Y;Z_{s}|E]
	\end{aligned}
\end{displaymath}in the third line, it can be achieved when the shortcut $\Phi(Z)$ is considered as a part of the Markov chain $Y$$\rightarrow$$\Phi(Z_{s})$$\rightarrow$$Z_{s}$ given the Assumptions 1-3. Specifically, we have   
\begin{lem}
	Given the data generation processes defined by FIIF, PIIF, and RS-SCM Assumptions, the shortcut variable $\Phi(Z_{s})$ with the Markov dependency $Y$$\leftarrow$$\Phi(Z_{s})$$\leftarrow$$Z_{s}$ holds the inequality
	\begin{small}
		\begin{displaymath}\begin{aligned}
				I[Y;\Phi(Z_{s})]- I[Y;Z_{s}]\geq I[Y;\Phi(Z_{s})|E]
				- I[Y;Z_{s}|E]
			\end{aligned}
		\end{displaymath}
	\end{small}
\end{lem}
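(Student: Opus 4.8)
The plan is to collapse the stated inequality to a single conditional--mutual--information comparison and then settle that comparison with the chain rule and the Markov structure of the shortcut. First I would rewrite both sides with the elementary identity $I[Y;A]-I[Y;A\,|\,E]=I[Y;E]-I[Y;E\,|\,A]$, obtained by expanding $I[Y;A,E]$ by the chain rule in its two orders. Using this once with $A=\Phi(Z_{s})$ and once with $A=Z_{s}$ and subtracting, the claim $I[Y;\Phi(Z_{s})]-I[Y;Z_{s}]\geq I[Y;\Phi(Z_{s})\,|\,E]-I[Y;Z_{s}\,|\,E]$ becomes exactly equivalent to
\[
I[Y;E\,|\,Z_{s}]\ \geq\ I[Y;E\,|\,\Phi(Z_{s})],
\]
so the whole lemma reduces to showing that conditioning the label--environment dependence on the finer variable $Z_{s}$ is never smaller than conditioning it on the coarser shortcut $\Phi(Z_{s})$.

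The second step proves this directly. Starting from the right-hand side, I add the non-negative term $I[Y;Z_{s}\,|\,E,\Phi(Z_{s})]$ and apply the chain rule twice:
\begin{align*}
I[Y;E\,|\,\Phi(Z_{s})] &\leq I[Y;E\,|\,\Phi(Z_{s})]+I[Y;Z_{s}\,|\,E,\Phi(Z_{s})]=I[Y;E,Z_{s}\,|\,\Phi(Z_{s})]\\
&= I[Y;Z_{s}\,|\,\Phi(Z_{s})]+I[Y;E\,|\,Z_{s},\Phi(Z_{s})].
\end{align*}
Now the two structural facts about $\Phi$ enter: the hypothesised Markov dependency $Z_{s}\rightarrow\Phi(Z_{s})\rightarrow Y$ gives $Y\perp Z_{s}\,|\,\Phi(Z_{s})$, hence $I[Y;Z_{s}\,|\,\Phi(Z_{s})]=0$; and since $\Phi$ is a transformation of $Z_{s}$ (equivalently $E\rightarrow Z_{s}\rightarrow\Phi(Z_{s})$, so that $\Phi(Z_{s})\perp (Y,E)\,|\,Z_{s}$ under the PIIF/FIIF/RS-SCM generative equations), adding $\Phi(Z_{s})$ to the conditioning set is redundant once $Z_{s}$ is present, i.e.\ $I[Y;E\,|\,Z_{s},\Phi(Z_{s})]=I[Y;E\,|\,Z_{s}]$. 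Substituting both facts gives $I[Y;E\,|\,\Phi(Z_{s})]\leq I[Y;E\,|\,Z_{s}]$, which with the reduction of the first step proves the lemma, and this supplies the inequality used in the third line of the \emph{Spuriousness} derivation.

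The argument is mostly bookkeeping with Shannon identities, so there is no deep obstacle. The one place needing genuine care is the equality $I[Y;E\,|\,Z_{s},\Phi(Z_{s})]=I[Y;E\,|\,Z_{s}]$, which is where the SCM assumptions actually get used: one must argue that $\Phi(Z_{s})$ carries no information about $(Y,E)$ beyond what already lies in $Z_{s}$ --- immediate when $\Phi$ is deterministic in $Z_{s}$, and otherwise requiring that $\Phi$'s only dependence on the rest of the causal graph factors through $Z_{s}$, which is exactly the sense in which the paper treats $\Phi(\cdot)$ as ``algorithmic-recovered'' and not part of the SCM. I would also double-check every sign in the reduction, since reversing the subtraction order silently flips the inequality.
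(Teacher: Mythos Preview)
Your argument is correct and proves the lemma as stated. The reduction in your first step is a clean equivalence: rearranging the claim to $I[Y;\Phi(Z_s)]-I[Y;\Phi(Z_s)\,|\,E]\geq I[Y;Z_s]-I[Y;Z_s\,|\,E]$ and applying the interaction-information identity $I[Y;A]-I[Y;A\,|\,E]=I[Y;E]-I[Y;E\,|\,A]$ on both sides collapses everything to $I[Y;E\,|\,Z_s]\geq I[Y;E\,|\,\Phi(Z_s)]$, and your second step dispatches that with the chain rule, the Markov assumption $Z_s\to\Phi(Z_s)\to Y$, and the fact that $\Phi(Z_s)$ is $\sigma(Z_s)$-measurable so that conditioning on $(Z_s,\Phi(Z_s))$ is the same as conditioning on $Z_s$.

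This is a genuinely different route from the paper's proof. The paper decomposes each side in the style of the data-processing-inequality argument, obtaining $I[Y;\Phi(Z_s)]-I[Y;Z_s]=I[Z_s;\Phi(Z_s)\,|\,Y]-I[Z_s;Y\,|\,\Phi(Z_s)]$ and the analogous identity conditioned on $E$; after killing the two $I[Z_s;Y\,|\,\Phi(Z_s)\cdots]$ terms with the Markov chain, it is left with $I[Z_s;\Phi(Z_s)\,|\,Y]\geq I[Z_s;\Phi(Z_s)\,|\,Y,E]$ and closes that gap by invoking two further structural facts from the SCM, namely $I(Z_s;E\,|\,Y)>0$ and $I(\Phi(Z_s);E\,|\,Y)=0$, the latter justified via a $d$-separation/backdoor argument. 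Your reduction to the single comparison $I[Y;E\,|\,Z_s]\geq I[Y;E\,|\,\Phi(Z_s)]$ is more economical: it needs only the Markov chain hypothesis and the determinism of $\Phi$, and it sidesteps the extra SCM-specific conditions that the paper's final step relies on. The paper's route, on the other hand, makes explicit exactly where each piece of causal structure is consumed, which may be pedagogically useful even if longer.
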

\begin{proof}
	Notice that, the left side of the inequality holds
	\begin{small}\vspace{-4pt}
		\begin{displaymath}\begin{aligned}
				I[Y;\Phi(Z_{s})]- I[Y;Z_{s}]=I[Z_{s};\Phi(Z_{s})|Y]-I[Z_{s};Y|\Phi(Z_{s})], 
			\end{aligned}
		\end{displaymath}
	\end{small}which can be derived from the steps of proving data processing inequality. We may derive the right side of the inequality by following a similar routine: 
	\begin{small}\vspace{-4pt}
		\begin{displaymath}\begin{aligned}
				&I[Y;\Phi(Z_{s})|E]- I[Y;Z_{s}|E]\\=& \Big(\hspace{-0.1em}I[Z_{s};\hspace{-0.1em}\Phi(Z_{s})\hspace{-0.1em},\hspace{-0.1em}Y\hspace{-0.1em}|E]\hspace{-0.2em}-\hspace{-0.2em}I[Y;Z_{s}|E]\Big)\hspace{-0.2em}\\&\ \ \ \ \ \ -\hspace{-0.2em}\Big(I[Z_{s}\hspace{-0.1em};\hspace{-0.1em}\Phi(Z_{s}\hspace{-0.1em}),\hspace{-0.1em}Y|E]\hspace{-0.2em}-\hspace{-0.2em}I[Y\hspace{-0.1em};\hspace{-0.1em}\Phi(Z_{s})|E]\hspace{-0.1em}\Big)
			\end{aligned}
		\end{displaymath}
	\end{small}\vspace{-4pt}and with MI's chain rule, we have
	\begin{small}
		\begin{displaymath}
			\begin{aligned}
				I[Z_{s};\Phi(Z_{s}),Y|E]&=I[Z_{s};\Phi(Z_{s})|E]+I[Z_{s};Y|\Phi(Z_{s}),E]\\&=I[Z_{s};Y|E]+I[Z_{s};\Phi(Z_{s})|Y,E].
			\end{aligned}
		\end{displaymath}
	\end{small}Combining the equations\hspace{-0.0em}, \hspace{-0.1em}the right term\hspace{-0.1em} \hspace{-0.1em} refers\hspace{-0.1em} to\hspace{-0.1em}
	\begin{small}
		\begin{displaymath}\begin{aligned}
				&I[Y;\Phi(Z_{s})|E]\hspace{-0.2em}-\hspace{-0.2em} I[Y;Z_{s}|E]\\=&I[Z_{s};\Phi(Z_{s})|Y,E]\hspace{-0.2em}-\hspace{-0.2em}I[Z_{s};Y|\Phi(Z_{s}),E].
			\end{aligned}
		\end{displaymath}
	\end{small}So the third-line inequality is equivalent with 
	\begin{small}
		\begin{displaymath}\begin{aligned}
				&I[Z_{s};\Phi(Z_{s})|Y]\hspace{-0.2em}-\hspace{-0.2em}I[Z_{s};Y|\Phi(Z_{s})]\hspace{-0.2em}\\\geq&\hspace{-0.2em}I[Z_{s};\Phi(Z_{s})|Y,E]\hspace{-0.2em}-\hspace{-0.2em}I[Z_{s};Y|\Phi(Z_{s}),E].
			\end{aligned}
		\end{displaymath}
	\end{small}We consider the Markov chain $Y$$\leftarrow$$\Phi(Z_{s})$$\leftarrow$$Z_{s}$ recovered by neural nets that reverses the data generation path in SCMs. $Z_{s}$,$Y$ are independent given $\Phi(Z_{s})$ so that $I[Z_{s};Y|\Phi(Z_{s})$ $]$=$0$. Then given $E$ as addition condition \emph{w.r.t.} the SCMs that FIIF, PIIF, and RS-SCM Assumptions suit, the variables $Z_{s}$ and $Y$ are independent so that $I[Z_{s};Y|\Phi(Z_{s}),E]$=$0$. So we only need to justify $I[Z_{s};\Phi(Z_{s})|Y]$$\geq$$I[Z_{s};\Phi(Z_{s})|Y,E]$. It can hold given $I(Z_{s};E|Y)$$>$$0$ and $I(\Phi(Z_{s});E|Y)$$=$$0$, 
	which are both satisfied across the SCMs with the Markov chain $Y$ $\leftarrow$$\Phi(Z_{s})$$\leftarrow$ $Z_{s}$($Z_{s}$,$E$ in the first condition are directly related in the SCMs and for the second condition, since $Z_{s}$ is a collider between $\Phi(Z_{s})$ and $E$, conditioning $Y$ blocks the only backdoor path from $E$ to $\Phi(Z_{s})$, which further leads to the independence). 
\end{proof}We can simply obtain the third line from Lemma.1.
\subsection{Proof of Proposition.1}
\begin{proof}
	The first property of the fake invariant feature $\Phi(Z_s)$ leads to $I[Y;E|Z_{c}\cup\Phi(Z_{s})]$$=$$0$. Given this, we consider the feature $Z$$\in$$\mathbb{R}^{n_c}$ composed by $Z_1$ and $Z_2$, \emph{i.e.}, $Z=[Z_1;Z_2]$, in which $Z_1$ represents the variable of feature subset in $Z_c$ ($Z_1$$\subset$$Z_c$); $Z_2$ denotes the variable of feature subset in $\Phi(Z_s)$ ($Z_2$$\subset$$\Phi(Z_s)$). Derived from the property of $\Phi(Z_{s})$, it leads to $I[Y;E|Z]$$=$$0$ due to $Z$ composed by the invariant and fake invarint parts of $Z_{c}\cup\Phi(Z_{s})$. In terms of $Z_1$$\cup$$Z_2$$\subset$$ $$Z_{c}$$\cup$$\Phi(Z_{s})$, it is also observed that
	\begin{displaymath}
		\begin{aligned}
			I[X;Z]&=I[X;[Z_{1};Z_{2}]]\\
			&=I[X;Z_{1}\cup Z_{2}]\\&\leq I[X;Z_{c}\cup\Phi(Z_{s})].
		\end{aligned}
	\end{displaymath}Therefore we have
	\begin{displaymath}
		\begin{aligned}
			&\lambda I[Y;E|Z]+\beta I[X;Z]\\
			= \ &\lambda I[Y;E|Z_{c}\cup\Phi(Z_{s})]+\beta I[X;Z]\\
			= \ &\leq I[Y;E|Z_{c}\cup\Phi(Z_{s})]+\beta I[X;Z_{c}\cup\Phi(Z_{s})].
		\end{aligned}
	\end{displaymath}The proposition has been proved.
\end{proof}
\subsection{Proof of Proposition.2}
\begin{proof}
	Suppose that $h^\ast(X)$$=$$Z_c\cup Z_F$ and the variable $Z$ denotes a subset of features recovered from $h^\ast(X)$. Given $Z_1$ as the variable of feature subset in $Z_c$ ($Z_1$$\subset$$Z_c$) and $Z_2$ as the variable of feature subset in $Z_F$ ($Z_2$$\subset$$ Z_F$), we consider four cases for $Z$: (1).$Z$$=$$Z_1$;(2).$Z$$=$$Z_2$;(3).$Z$$=$$Z_1$$\cup$$Z_2$;(4).$Z$$=$$\emptyset$. The final case is obviously impossible otherwise we obtain 
	\begin{displaymath}
		I[Y;Z|h^\ast(X)/Z]=I[Y;\emptyset|h^\ast(X)]=0
	\end{displaymath}which violates the condition given. The second case is also impossible since
	\begin{displaymath}
		\begin{aligned}
			I[Y;Z|h^\ast(X)/Z]&=I[Y;Z_2|h^\ast(X)/Z_2]\\
			&=I[Y;Z_2|(Z_c\cup Z_F)/Z_2]
			\\&=I[Y;Z_2|Z_c\cup (Z_F/Z_2)].
		\end{aligned}
	\end{displaymath}In terms of $Z_F$$=$$\Phi(Z_s)$ and $Z_F/Z_2\subset Z_F$, it leads to
	\begin{displaymath}
		\begin{aligned}
			I[Y;Z|h^\ast(X)/Z]=0
		\end{aligned}
	\end{displaymath}which also violates the condition given. 
	
	Here we focus on the third case. It can not assume $Z_2\subsetneqq Z_F$ otherwise
	\begin{displaymath}
		\begin{aligned}
			I[Y;Z|h^\ast(X)/Z]=I[Y;Z|(Z_c/Z_1)\cup(Z_F/Z_2)],
		\end{aligned}
	\end{displaymath}where $Z_F/Z_2$$\neq$$\emptyset$ leads to either
	$$I[Y;Z|(Z_c/Z_1)\cup(Z_F/Z_2)]=0$$ that violates the given condition with $Z_1\subsetneqq Z_c$ or 
	$$I[Y;Z|Z_F/Z_2]=0$$
	that violates the second property of $\Phi(Z_s)$$=$$Z_F$ with $Z_1=Z_c$. 
	In terms of $Z_2=Z_F$ and $Z_1\subsetneq Z_c$, 
	$$I[[Y;Z|h^\ast(X)/Z]=I[Y;Z_1\cup Z_F|Z_c/Z_1]=0$$
	obviously violates the given second condition also. To this, we only have $Z_1=Z_c$ and $Z_2=Z_F$ with
	\begin{displaymath}
		\begin{aligned}
			I[Y;h^\ast(X)/Z|Z]&=I[Y;\emptyset|Z_c\cup Z_F]=0,
			\\I[Y;Z|h^\ast(X)/Z]&=I[Y;Z_c\cup Z_F|\emptyset]\\&=I[Y;Z_c\cup Z_F]>0
		\end{aligned}
	\end{displaymath}that suits the given condition.
	Finally, the first case $Z_1\subsetneqq Z_c$ leads to
	\begin{displaymath}
		\begin{aligned}
			I[Y;Z|h^\ast(X)/Z]=I[Y;Z|(Z_c/Z_1)\cup Z_F]=0
		\end{aligned}
	\end{displaymath}which violates the given second condition. In terms of $Z=Z_1= Z_c$, there exists
	\begin{displaymath}
		\begin{aligned}
			I[Y;h^\ast(X)/Z|Z]&=I[Y;Z_F|Z_c]=0,\\
			I[Y;Z|h^\ast(X)/Z]&=I[Y;Z_c|Z_F]>0,
		\end{aligned}
	\end{displaymath}obivously suiting the given condition also. 
	
	As discussed previously, we obtain the candidates $Z=Z_c$ and $Z=Z_c\cup Z_F$ that both fulfill the propostion's condition implied by the joint CMI objecitve. Whereas the trivial $Z=Z_c\cup Z_F$ would dominate the objective since
	\begin{equation}\label{md}
		\begin{aligned}
			&I[Y;h^\ast(X)/Z|Z]-\lambda I[Y;Z|h^\ast(X)/Z] \ (\textbf{s.t.} \ Z=Z_c)\\
			= \ &I[Y;Z_F|Z_c]-\lambda I[Y;Z_c|Z_F] \\ 
			= \ &I[Y;\emptyset|Z_c\cup Z_F]-\lambda I[Y;Z_c|Z_F] \\ &\ \ \ \ (I[Y;Z_F|Z_c]=I[Y;\emptyset|Z_c\cup Z_F]=0)\\
			> \ &I[Y;\emptyset|Z_c\cup Z_F]-\lambda I[Y;Z_c\cup Z_F|\emptyset]
			\\ &\ \ \ \ (I[Y;Z_c|Z_F]<I[Y;Z_c\cup Z_F|\emptyset])\\
			= \ &I[Y;Z_c/Z_1|Z_1\cup Z_F]-\lambda I[Y;Z_1\cup Z_F|Z_c/Z_1] \\ &(s.t. Z=Z_1\cup Z_F)
		\end{aligned}
	\end{equation}so that we incorporate 
	\begin{equation}\label{mi}
		\begin{aligned}
			I(Z;h(X)/Z)>0\rightarrow\max_{Z}I(Z;h(X)/Z)
		\end{aligned}
	\end{equation}into the joint CMI objective that prevents the result from the trivial solution $Z=h(X)$.
	
	The proposition has been proven.
\end{proof}

\section{Appendix.B}
Our implementation is derived from the code /github.com/ Luodian/IIB/tree/IIB, by simply adding the joint CMI contraint and Eq.\ref{mi} that jointly train our neural feature selector with feature encoder, invariant predictor and domain-aware predictor in IIB. More specifically, in our diagnostic experiments the architectures of feature encoder, invariant predictor and domain-aware predictor exactly follow the architecture used in their CS-MNIST experiment; similarly, our real-world experiment also follows their architecture setup used in the DomainBed experiments. In terms of our neural feature selector $s()$, our diagnostic experiments take a two-layer MLP architecture with a bottleneck \cite{he2016deep}, which follows the width size consistent to the second last layer of the feature encoder; our real-world experiments take a MAB architecture \cite{lee2019set} with a single-head attention deployment. We incorporate the mutual information neural estimator (MINE) \cite{belghazi2018mutual}(https://github.com/mohith-sakthivel/mine-pytorch) to achieve the MI maximization constraint (Eq.\ref{mi}) between $s(h(X))\odot h(X)$ and $(\mathbf{1}-s(h(X)))\odot h(X)$. 

The inner loop training phase of IIL refers to the optimization of InvRat/ IIB based on the code /github.com/Luodian/ IIB/tree/IIB, wherein we follow their hyperparameter setup. In the outer loop, we set up $\lambda=0.1$ across all experiments and use the same optimizer and hyperparameter in the inner loop.  In the alternative phase, we employs the previous optimization setup both for the inner loop and outer loop, then after an outer loop epoch finishes, IIL alters to the inner loop to fine-tune the subnetworks with $10$ iterations until the in-distribution performance converges.

\bigskip

\bibliography{aaai24}

\end{document}